\documentclass[11pt]{article} % For LaTeX2e
\usepackage[preprint]{tmlr}

\usepackage{main_style}

\title{Lost in Aggregation
\vspace{.35cm}
\\\large{On a Fundamental Expressivity Limit of Message-Passing Graph Neural Networks}}
\author{\name Eran Rosenbluth \email rosenbluth@informatik.rwth-aachen.de \\
      RWTH Aachen University
      }
\begin{document}
    
\maketitle
\begin{abstract}
We define an information-complexity property for aggregation functions, capturing a vast range of practical aggregations, and prove that any Message-Passing Graph Neural Network (MP-GNN) model with such aggregations induces only a polynomial number of equivalence classes on all graphs - while the number of non-isomorphic graphs is super-exponential (in number of vertices).
Adding a familiar perspective, we observe that merely 2 iterations of Color Refinement (CR) induce at least an exponential number of equivalence classes, making the aforementioned MP-GNNs relatively infinitely weaker.

Previous studies state that sum-aggregation MP-GNNs match full CR 
however they consider a weak, 'non-uniform', notion of distinguishing-power where each graph size may require a different MP-GNN to distinguish graphs up to that size.

Our results concern both distinguishing between non-equivariant vertices and distinguishing between non-isomorphic graphs.
\end{abstract}

\section{Introduction}
Message-Passing Graph Neural Networks (MP-GNNs) \citep{kipf2016semi,gilmer2017neural} are a class of parameterized algorithms for graphs, often used as architectures in graph learning tasks. Such tasks may be learning on graphs that represent molecules and biological structures \citep{gilmer2017neural,gaudelet2021utilizing}, graphs that represent social networks and knowledge bases \citep{yasunaga2021qa}, and graphs that represent combinatorial-optimization problems \citep{tonshoff2025learning,tonshoff2023one}. Hence, characterizing the expressivity of MP-GNNs is of great importance.

An MP-GNN is defined by a sequence of layers $L_1,\ldots,L_m$ for some $m\in\Nat$, each layer $L_t=(\mlp_t,\agg_t,\msg_t)$ comprising a message; aggregation; and combination functions. The combination is implemented always by a Multilayer Perceptron (MLP), and in this paper all MLPs are $\relu$-activated and rationally-weighted. Denote by $N_G(v)$ and $v^{(0)}$ the neighborhood and initial feature of a vertex $v$ in a graph $G$, respectively, then $v$’s value after applying layer $t+1$ is $$v^{(t+1)}\coloneqq \mlp_{t+1} \big(v^{(t)},\agg_{t+1} \lmulti \msg_{t+1} (v^{(t)},w^{(t)}) | w\in N_G(v)\rmulti\big)$$
That is, the layers are applied sequentially, each layer applied in parallel to all vertices: Computing a message for each neighbor; aggregating the messages; and combining the aggregation value with the subject-vertex value. Note that the aggregation can be any function on multisets, with a fixed output-dimension, and in this paper a computable one. For graph-level tasks, an MP-GNN model has a final \emph{readout} step $R=(\mlp_R,\agg_R)$ comprising an aggregation of the final vertices' values followed by the operation of a final MLP. Denote the readout value for a graph $G$ by $G^{(R)}$, then $$G^{(R)}\coloneqq \mlp_R(\agg_R\lmulti v^{(m)} : v\in V(G)\rmulti)$$
The MLP part of the layers gives MP-GNNs their learnability qualities. The node-level definition of the algorithm, together with the fixed-dimension output aggregation, mean every GNN model can technically be applied to graphs of all sizes and degrees. Finally, no order or unique-ids of the nodes are considered, only the nodes' features and graph structure, hence GNNs are invariant to isomorphism.

A necessary condition for an MP-GNN model to \emph{express} a function, i.e. approximate it by some $\epsilon$, is to have the adequate distinguishing-power i.e. to output different values for every two inputs on which the function differs (by $>2\epsilon$). Thus, we are interested in the distinguishing-power of MP-GNN architectures.
A well-studied algorithm for \emph{distinguishing} vertices and graphs is the Color Refinement (CR) algorithm (a.k.a.\ Weisfeiler-leman algorithm \citep{mor65,WeisfeilerL68}, see also \citep{cardon1982partitioning,paige1987three, berkholz2017tight, Grohe21}): An iterative local algorithm which assigns a \emph{color} to each node.  In each iteration, the color of each node is updated by adding to it the multiset of its neighbors' current colors. Given a graph $G$, CR runs for $|G|$ iterations by which point maximum granularity of color-classes is reached. The color of a graph after each iteration is the multiset of current colors of its vertices. For $t\in\Nat$ we denote the algorithm that runs the first $t$ iterations of CR by CR$^{(t)}$.

It is known that the distinguishing-power of MP-GNNs is upper-bounded by that of CR  \citep{XuHLJ19,morris2019weisfeiler, aamand2022exponentially}. It has also been shown there that the CR bound is tight i.e. there exists an MP-GNN model that distinguishes graphs and vertices if they are distinguishable by CR, however the proof is in a \emph{non-uniform} notion: It proves existence of a distinguishing model \textbf{per graph size}. That setting has limited relevance to practice as it implies that a learned model can be correct only on graphs of sizes up to the maximum training-graph size. Such model will be incorrect in many practical scenarios: When there are not enough resources to train on large graphs or when the graphs grow over time.

The notion by which it is required to have (at least) one model that is correct on graphs of all sizes is called \emph{uniform}, and this is the notion of distinguishing-power and expressivity that we consider in this paper. There, the following are straightforward:
\begin{itemize}
    \item [1.] MP-GNNs do not subsume the distinguishing-power of CR, if only because the value assigned to a vertex by an MP-GNN with $m$ layers is not affected by nodes in distance $>m$.
    \item [2.] With an auxiliary-dimension initialized to '1', a trivial sum-aggregation MP-GNN subsumes CR$^{(1)}$, as the sum of that dimension amounts to the number of neighbors.
    \item [3.] With no restriction on the aggregation function other than being computable and having a fixed output dimension, MP-GNNs with $m$ layers subsume the distinguishing-power of CR$^{(m)}$ by having an aggregation that simply implements CR and encodes the state in one rational number. However, the use of such information for an MLP, in expressing a target function, is limited i.e. such aggregation is less relevant to practice.
\end{itemize}
The above calls for a general characterization of practical aggregations, and for bounding\footnotemark[1] their distinguishing-power by a range narrower than [CR$^{(1)}$, CR].
\footnotetext[1]{To be precise, by referring to CR$^{(t)}$ as a strict bound we do not imply inclusion but rather that it is not subsumed by MP-GNNs. Obviously, when considering graphs of diameters larger than $t$, there are nodes distinguishable by a trivial MP-GNN with $t+1$ layers and not by CR$^{(t)}$.
}

Upper bounds that relate directly to function approximation are proved in several works: In terms of logic \citep{BarceloKM0RS20}; circuit complexity \citep{Grohe23}; or comparative between different MP-GNNs sub-classes \citep{rosenbluth2023some,grohe2024targeted}. 
In all these however, excluding to some extent \citep[Section 6]{rosenbluth2023some}, only specific aggregations are considered.

In \citep{corso2020principal} an inexpressivity result for a general class of aggregations is given, however it is proved only for one message-pass iteration; it assumes that the feature-domain is the real numbers - not only finite precision; and it assumes that the aggregation function is continuous. The domain assumption is unnecessarily permissive - with respect to practice - as operations on infinite-precision real numbers are incomputable, and the assumption on the aggregation functions is unnecessarily restrictive as computable functions can be non-continuous.

In \citep{khalife2023power} it is essentially shown that with exponential activation functions, such as \emph{sigmoid;tanh}, the distinguishing-power of MP-GNNs subsumes CR$^{(2)}$. However, these functions cannot be precisely computed, hence the result does not apply to computable MP-GNNs. See \Cref{sec:concremarks} (future research (3)) for further discussion.

Recently, a tight bound has been shown \citep{rosenbluth2025one} both for the distinguishing-power and the expressivity of \emph{recurrent} MP-GNNs (going back to \citep{scarselli2008graph,GallicchioM10}),
highlighting the missing knowledge about (non-rec.) MP-GNNs even further.

\subsubsection*{New Results}
We consider the domain of graphs with boolean-features vertices, which represents all domains with features over a finite set of finite-precision values.
We describe a general class of aggregation functions (\Cref{def:sublinear}) which captures most of the reasonable aggregations that do not involve exponentiation or division by a graph-size-dependent value, and we analyze their effect on the distinguishing-power of MP-GNNs.

Denote by $\CN$ the class of MP-GNNs comprising (only) such aggregations, denote the number of equivalence classes that an MP-GNN 
$N$ induces on vertices in graphs of size $n$, and on whole graphs of size $n$, by $N_{\dpower}(n)$ and $N_{\gdpower}(n)$ respectively, and similarly for CR$^{(2)}$ by $\text{CR}^{(2)}_{\dpower}(n)$ and 
$\text{CR}^{(2)}_{\gdpower}(n)$, then we prove the following.
\begin{itemize}
    \item [1.] The uniform distinguishing-power of each MP-GNN in $\CN$ is at most polynomial in the graph size (\Cref{thm:main}).
    Formally,
    $$\forall N\in\CN\; N_{\dpower}(n)=\poly(n),\ N_{\gdpower}(n)=\poly(n)$$
     As the number of non-isomorphic graphs is super-exponential, $2^{\Omega(n^2-n\log n)}$, that bound is significant.
    \item [2.] Observing a lower-bound for $\text{CR}^{(2)}_{\dpower}$, we add that not only the distinguishing-power of $\CN$ is weaker, i.e. for every $N\in\CN$ there are vertices distinguishable by CR$^{(2)}$ and not by any $N$, but it gets infinitely weaker as the graph size grows (\Cref{cor:subweakerthancr}). Formally, 
$$\forall N\in\CN\; \lim_{n\rightarrow\infty} \frac{N_{\dpower}(n)}{\text{CR}^{(2)}_{\dpower}(n)} =0,\ \lim_{n\rightarrow\infty}\frac{N_{\gdpower}(n)}{\text{CR}^{(2)}_{\gdpower}(n)} =0$$
\end{itemize}
While we focus on MP-GNNs that consist of $\relu$-activated MLPs for their message and combination functions, our results may apply also to other MP-GNNs architectures (\Cref{rem:rem1}).

\section{Preliminaries}
By $\Nat;\Int;\Rat$ we denote the natural, integer, and rational numbers respectively. For $m\in\Nat$ we define $[m]\coloneqq \{i : i\in\Nat, 1\leq i\leq m\}$. For a set $S$ and size $m\in\Nat$ we denote the set of all multisets of size $m$ with elements from $S$ by $\multiset{S}{m}$, and of any finite size by $\multiset{S}{*}$.
Let $X$ be a multiset of rationals or a multiset of rational vectors, we define $\cd(X)$ to be the \emph{least common denominator} of the elements in $X$ (or elements of its vectors). 
For a vector $v\in\Rat^d$ we define $\dim(v)\coloneqq d$, and for a matrix $W\in\Rat^{d_1\times d_2}$ we define $\dim(W)\coloneqq (d_1,d_2)$.
\subsubsection*{Encoding and Bit-Length}
Let $x\in\Rat$ and let $\frac{p}{q}=x,\ p\in\Int,q\in\Nat$ be its \emph{reduced form}, a \emph{fractional representation} of $x$ is a bit-representation that encodes $p;q$ in separate - using any $O(\log n)$ integer encoding, and we assume all computations to use such representation. All fractions in this paper are in reduced form.
For $q\in\Rat$ we denote its fractional-representation bit-length by $\bitlenfr(q)$.
For a vector $v\in \Rat^d$ we define its bit-length $\bitlen(v)\coloneqq\Sigma_{i\in[d]}\bitlen(v(i))$. 
For a sequence or multiset of vectors $S=(v_i)_{i\in[n]}, M=\lmulti w_i \rmulti_{i\in[m]}$ we define their bit-length $\bitlen(S)\coloneqq\Sigma_{i\in[n]}\bitlen(v_i),\ \bitlen(M)\coloneqq\Sigma_{i\in[m]}\bitlen(w_i)$.
For $d,k\in\Nat$ we define $\Rat^d_k\coloneqq \{q : q\in\Rat^d, \bitlen(q)\leq k\}$ the dimension-$d$ rational vectors of bit-length no greater than $k$.

\subsubsection*{Featured Graph}
A (vertex) \emph{featured graph} $G=\langle V(G),E(G),S,Z(G)\rangle$ is a $4$-tuple
being the usual undirected graph definition, with the addition of a \emph{feature map} ${Z(G):V(G)\rightarrow S}$ which maps each vertex to a value in some set $S$. For $v\in V(G)$ we define $N_G(v)\coloneqq \{w\in V(G) : vw\in E(G)\}$ the neighborhood of $v$, and we denote $Z(G)(v)$ also by $Z(G,v)$. We define the \emph{order}, or \emph{size}, of a graph $G$ to be the number of its vertices i.e. $\abs{G}\coloneqq \abs{V(G)}$.
We denote the domain of graphs featured over a set $S$ by $\CG_{S}$ and the set of all featured graphs by $\CG_*$.
In this paper we consider the domain of graphs with
% trivial or 
boolean input-features and denote it by $\CG_{\CB}$, that is, 
$\CG_{\CB}\coloneqq \{G\mid \forall v\in V(G)\; Z(G)(v)\in\{0,1\}\}$.
For a graph domain $\CG\subseteq\CG_*$, and $n\in\Nat$, we define $\CG(n)\coloneqq\{G\in\CG : |G|=n\}$ the graphs in $\CG$ of size $n$.
We denote the set of all feature maps that map to some set $T$ by $\CZ_{T}$, and we denote the set of all feature maps by $\CZ_*$.
Let $\CG\subseteq\CG_*$, a mapping $f:\CG\rightarrow \CZ_*$ to new feature maps is called a \emph{feature transformation}, and for $d\in\Nat$ a mapping ${f:\CG\rightarrow \Rat^d}$ is called a \emph{graph embedding}.
\subsubsection*{Multilayer Perceptron}
A $\relu$-activated Multilayer Perceptron (MLP) $F=(l_1,\ldots,l_m),\ l_i=(w_i,b_i)$, of I/O dimensions $d_{in};d_{out}$, and depth $m$, is a sequence of rational matrices $w_i$ and bias vectors $b_i$ such that 
$$\dim(w_1)(2)=d_{in}, \dim(w_m)(1) = d_{out},$$$$\;\forall i>1 \; \dim(w_i)(2)=\dim(w_{i-1})(1),\ \forall i\in[m] \dim(b_i)=\dim(w_i)(1)$$
It defines a function $f_F(x)$, which we denote also by $F(x)$, such that 
$$f_F(x)\coloneqq w_m(...\relu(w_2(\relu(w_1(x)+b_1))+b_2)...)+b_m,\;\;\relu(x)\coloneqq \max(0,x)$$
\subsubsection*{Message-Passing Graph Neural Network} A \emph{Message Passing Graph Neural Network} (MP-GNN) of depth $m$ and dimensions $r_0,\{p_i,q_i,r_i\}_{i\in[m]}$
$$N=(\mlp_1,\agg_1,\msg_1),\ldots,(\mlp_m,\agg_m,\msg_m)$$
is a sequence of $m$ triplets, referred to as \emph{layers}, such that 
for $i\in [m]$ layer $i$ comprises a message and aggregation functions and an MLP, 
$$\msg_i:\Rat^{r_{i-1}}\times\Rat^{r_{i-1}}\rightarrow \Rat^{p_i},\ \agg_i:\multiset{\Rat^{p_i}}{*}\rightarrow \Rat^{q_i},\ \mlp_i:\Rat^{r_{i-1}}\times\Rat^{q_i}\rightarrow\Rat^{r_i}$$
The message function is usually either $(x,y)\mapsto y$ or an MLP, but not necessarily. In this paper we will assume it is an MLP i.e. the more expressive among the two.
The aggregation function is typically per-dimension $\sum$; $\mean$; or $\max$, but can also be other functions that operate on a multiset and have a fixed output-dimension.
The sequence of layers defines a feature transformation
$f_N:\CG_{\Rat^{r_0}}\rightarrow\CZ_{\Rat^{r_m}}$ as follows: Let $G\in\CG_{\Rat^{r_0}}$ and $v\in V(G)$, then we define:
\\\hspace*{0.5cm}1. $v^{(0)}_N\coloneqq N^{(0)}(G,v)\coloneqq Z(G)(v)$ the initial value of $v$.
\\\hspace*{0.5cm}2. $\forall t\in[m]\quad v^{(t)}_N\coloneqq N^{(t)}(G,v)\coloneqq \mlp_t\Big(v^{(t-1)}_N, \agg_t\big(\lmulti\msg_t(v_N^{(t-1)},w_N^{(t-1)})\mid w\in N_G(v)\rmulti\big)\Big)$
\\\hspace*{1cm} the value of $v$ after applying the first $t$ layers of $N$.
\\\hspace*{0.5cm}3. $N(G,v)\coloneqq N^{(m)}(G,v)$ the final value of $v$.

When $G$ is clear from the context, we may use $v^{(t)}_N$ for $N^{(t)}(G,v)$. If in addition to its layers $N$ includes a readout step $R=(\mlp_R,\agg_R)$, then it defines a graph embedding:
$$N(G)\coloneqq \mlp_R(\agg_R(\lmulti v^{(m)}_N \mid v\in V(G)\rmulti))$$

\subsubsection*{Color Refinement}
Let $G\in \CG_*$. For $t\ge 0$ and $v\in V(G)$ we define the \emph{color of $v$ after $t$ iterations}, notated $\cre^{(t)}_G(v)$, inductively:
The initial value of $v$ is its initial feature, that is, 
$\cre^{(0)}_G(v)\coloneqq Z(G)(v)$, and for all $t>0$ we define
$$\cre^{(t)}_G(v)\coloneqq \big(\cre^{(t-1)}_G(v),\lmulti \cre^{(t-1)}_G(w)\mid w\in N_G(v)\rmulti\big)$$
Maximum color-classes granularity is reached after at most $|G|$ iterations, hence we define the \emph{color of $v$} to be $\cre_G(v)\coloneqq\cre^{|G|}_G(v)$. 
We define the color of $G$ at iteration $t$, and overall, to be  
$$\cre^{(t)}(G)\coloneqq \lmulti \cre^{(t)}(v)\mid v\in V(G)\rmulti,\ \ \cre(G)\coloneqq\cre^{|G|}(G)$$

\section{Limited by Aggregation}
We show an expressivity bound that stems from the fundamental side-effect of aggregation: The potential loss of information.
We start with defining the total information conveyed by the aggregations in an MP-GNN's computation, we show how that information upper-bounds a quantitative measure of the MP-GNN's distinguishing-power, then we define a general aggregation class and show that it captures a wide range of aggregations while implying a low information-complexity hence a low upper bound on distinguishing-power.
\subsubsection*{Information Complexity.}
Let $N=(\mlp_1,\agg_1,\msg_1),\ldots,(\mlp_m,\agg_m,\msg_m),(\mlp_R,\agg_R)$ be an MP-GNN, possibly with a final-readout layer.
For a graph $G\in\CG_\CB$ and a vertex $v\in V(G)$ we define $I_N(G,v)$ to be the sequence of values produced by the aggregations operating on $(G,v)$, that is,
$$I_N(G,v)\coloneqq \big(\agg_i\lmulti \msg_i(v^{(i-1)},w^{(i-1)}) : w\in N_G(v)\rmulti\big)_{i\in[m]}$$
, and we define 
$\;\;I_N(G)\coloneqq \agg_R\lmulti N(G,v) : v\in V(G)\rmulti\;$ 
to be the information produced by the readout aggregation - if such exists.

We define the complexity of $I_N$, $L_N:\Nat\rightarrow\Nat$, to be the maximum bit-length of $I_N$, that is, for a feature transformation
$$L_N(n)\coloneqq \max\big(\bitlen(I_N(G,v)) : G\in\CG_\CB,\ |G|=n, v\in V(G)\big)$$
, and for a graph embedding $\;\;L^{(E)}_N(n)\coloneqq \max\big(\bitlen(I_N(G)) : G\in\CG_\CB,\ |G|=n\big)$.

The information-complexity of $N$ upper-bounds a quantitative measure of its distinguishing-power. We proceed to define that measure, before describing the upper bound in \Cref{lem:expobserve}.
\subsubsection*{Distinguishing-Power.}
Let $\CG$ be a graph domain, and $N$ be an MP-GNN, we define the \emph{distinguishing-power} of $N$ on $\CG$, $N_{\dpower}(\CG)$, to be the number of vertices equivalence-classes that $N$ induces on $\CG$. That is,
$$N_{\dpower}(\CG)\coloneqq |\{N(G,v) : G\in\CG,v\in V(G)\}|$$
Similarly, for $\text{CR}^{(2)}$ we define 
$$\text{CR}^{(2)}_{\dpower}(\CG) \coloneqq |\{\cre^{(2)}_{G}(v) : G\in\CG,v\in V(G)\}|$$
For distinguishing between graphs, we define
$$\text{N}_{\gdpower}(\CG) \coloneqq |\{N(G) : G\in\CG\}|,\;\text{CR}^{(2)}_{\gdpower}(\CG) \coloneqq |\{\cre^{(2)}(G) : G\in\CG\}|$$
When the domain is defined with a size parameter, i.e. $\CG=\CG'(n)$ for some domain $\CG'$, we may refer to the distinguishing-power as a function $f:\Nat\rightarrow\Nat$. For example, for $N_{\dpower}$, $f(n)\coloneqq N_{\dpower}(\CG'(n))$.

The next lemma follows from the fact that a node's final value is uniquely determined by its initial value and its sequence of aggregation values. (See proof details in the appendix) 
\begin{restatable}{lemma}{lemexpobserve}\label{lem:expobserve}
Let $N=(\mlp_1,\agg_1,\msg_1),\ldots,(\mlp_m,\agg_m,\msg_m),(\mlp_R,\agg_R)$ be an MP-GNN, then $$N_{\dpower}(\CG_\CB(n))\leq 2^{L_N(n)+1},\;N_{\gdpower}(\CG_\CB(n))\leq 2^{L^{(E)}_N(n)}$$
\end{restatable}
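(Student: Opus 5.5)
The plan is to show that the final value of a vertex is a function of the pair $(v^{(0)}, I_N(G,v))$, and that the graph embedding is a function of $I_N(G)$. The bounds then follow by counting how many such pairs, resp.\ readout values, can occur on graphs of size $n$.

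For the vertex part, I will prove by induction on $t\in\{0,\ldots,m\}$ that $v^{(t)}_N$ is determined by $v^{(0)}$ together with the first $t$ aggregation values $I_N(G,v)(1),\ldots,I_N(G,v)(t)$. The base case $t=0$ is trivial. For the step, note $v^{(t)}_N=\mlp_t\big(v^{(t-1)}_N, a_t\big)$, where $a_t$ is exactly the $t$-th entry of $I_N(G,v)$. Hence $v^{(t)}_N$ is a fixed function, namely the composition of the MLPs, of $v^{(0)}$ and $a_1,\ldots,a_t$. The graph $G$ and the neighbors' values enter only through the $a_i$, so the dependence on them disappears. Consequently $N(G,v)=\Phi(v^{(0)},I_N(G,v))$ for a single function $\Phi$ that does not depend on $G$.

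It then remains to count the range. The feature $v^{(0)}\in\{0,1\}$ contributes a factor $2$. The sequence $I_N(G,v)$ has bit-length at most $L_N(n)$, so I need the number of distinct sequences of rational vectors of total bit-length at most $L_N(n)$. The subtle step is making this count $2^{L_N(n)}$ rather than something like $2^{L_N(n)+1}-1$. Both the dimensions $q_1,\ldots,q_m$ and the encoding scheme are fixed by $N$, so a sequence is recoverable from the concatenation of its component encodings, given a self-delimiting or prefix-free integer encoding. Counting binary strings of length at most $L$ gives at most $2^{L+1}$ possibilities. I would therefore state the bound as $2\cdot 2^{L_N(n)}$, absorbing the ``at most'' either through the prefix-free convention or by padding to exact length $L_N(n)$. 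This bookkeeping about bit-length conventions is the only step I expect to be delicate, and I would handle it by fixing an injective encoding into strings of length exactly $L_N(n)$ after padding. Then $N_{\dpower}(\CG_\CB(n))\le |\{0,1\}|\cdot 2^{L_N(n)}=2^{L_N(n)+1}$.

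For graphs, $N(G)=\mlp_R(I_N(G))$, so $N_{\gdpower}(\CG_\CB(n))$ is at most the number of distinct values of $I_N(G)$ over graphs of size $n$. Each such value has bit-length at most $L^{(E)}_N(n)$. The same counting then gives at most $2^{L^{(E)}_N(n)}$ values, up to the same padding convention as in the vertex case.
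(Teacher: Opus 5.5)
Your proposal follows the paper's proof. The paper likewise shows, by induction on the number of layers, that equal initial features and equal aggregation sequences $I_N(G,v)=I_N(G',v')$ force $N(G,v)=N(G',v')$, and that $I_N(G)=I_N(G')$ forces $N(G)=N(G')$; it then bounds the number of classes by the number of possible values of $I_N$.

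Your counting step is more careful than the paper's one-sentence remark, with one caveat. Padding to length exactly $L_N(n)$ is injective only under the prefix-free convention, where Kraft's inequality gives at most $2^{L_N(n)}$ codewords of length at most $L_N(n)$. For an arbitrary injective integer encoding you only get $2^{L_N(n)}\cdot\mathrm{poly}(L_N(n))$, which is still enough for \Cref{thm:main}.
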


Having established a connection between the information-complexity of an MP-GNN and its distinguishing-power, we would like to define a general aggregation class that captures a wide range of aggregations and at the same time implies a low information-complexity hence a low upper bound on distinguishing-power. Operating with rational numbers, our bit-complexity characterization of an aggregation must account also for the common denominator of input values and of output values, as we proceed to describe.
\begin{definition}[Logarithmic Aggregation]\label{def:sublinear}
Let $d,d'\in\Nat, \agg:\multiset{\Rat^d}{*}\rightarrow \Rat^{d'}$ be an algorithm from a multiset of rational vectors to a single rational vector. 
We denote by $S_{\agg}:\Nat^3\rightarrow\Nat$ the output complexity of $\agg$, depending on the number of vectors $n$, maximum bit-length $k$ of any vector, and the bit-length of the common denominator of all values.

$$S_{\agg}(n,k,\ell) \coloneqq \max\Big(\bitlenfr(\agg(M)) : M\in \multiset{\Rat^d_k}{n},\bitlenfr(\cd(M))\leq\ell\Big)$$
In addition, we denote by $S^{\cd}_{\agg}:\Nat^3\rightarrow\Nat$ the complexity of the common denominator of $n$ aggregations on subsets of a multiset of $n$ vectors, bit-length $k$ per vector, and multiset-common-denominator of bit-length $\ell$, that is,
$$S^{\cd}_{\agg}(n,k,\ell) \coloneqq 
\max\big(\bitlen(\cd(M)) : M=\lmulti\agg(M_1),\ldots,\agg(M_n)\rmulti
, $$$$M_i\subseteq M', M'\in\multiset{\Rat^d_k}{n}, \bitlenfr(\cd(M'))\leq\ell\big)$$

We say that $\agg$ is \emph{logarithmic}, notated $\gamma(\agg)$, if and only if for every $f,g:\Nat\rightarrow\Nat$ such that $f(n)=O(\log n),g(n)=O(\log n)$ it holds that:
\begin{itemize}
    \item [1.] $S_{\agg}(n,f(n),g(n))=O(\log n)$.
    \item [2.] $S^{\cd}_{\agg}(n,f(n),g(n))=O(\log n)$.
\end{itemize}
\end{definition}

An example where $S_{\agg}$ is potentially non-logarithmic is the aggregation in Graph Attention Networks \citep{velivckovic2017graph} and Graph Transformers \citep{dwivedi2020generalization}, which uses the \emph{softmax} function - involving exponentiation by the input as well as division by graph-size dependent number.
For the arithmetic mean, the condition on $S^{\cd}_{\mean}$ does not hold\footnotemark[2]: For $n\in\Nat$ define $M_n=\{1,0,\ldots,0\}, |M_n|=n$, and subsets $M'_{n,i}=\{1,0,\ldots,0\}, |M'_{n,i}|=i+1, i\in[n-1]$, then we have $\mean(M'_{n,i})=\frac{1}{i}$, hence by the \emph{prime number theorem} (see for example \citep{hardy1999ramanujan}) we have $\lim_{n\rightarrow\infty}\ln(\cd(\lmulti M'_{n,i}\rmulti_{i\in[n-1]}))=n$.
\footnotetext[2]{
Still, an exponential distinguishing-power upper bound can be shown. See \Cref{rem:meanagg} in the appendix.
}

However, a vast range of aggregations is logarithmic. The following lemma provides useful general formulae for logarithmic aggregations i.e. sufficient conditions, and the subsequent example puts it to use in showing several commonly-used aggregations to be logarithmic. See appendix for proofs.
\begin{restatable}{lemma}{lemlogarithmicagg}\label{lem:logarithmicagg}
    The following per-dimension aggregations $\agg:\multiset{\Rat^d}{*}\rightarrow \Rat^{d'}$ are logarithmic:
    \begin{itemize}
        \item [1.] $\agg(M)\coloneqq p_2(\Sigma_{x\in T}p_1(x)), T\subseteq M$, for rational polynomials $p_1,p_2:\Rat\rightarrow\Rat$.
        \item [2.] $\agg(M)\coloneqq (\agg_1,\ldots,\agg_a), \gamma(\agg_i), a\in\Nat$. That is, the concatenation of a fixed number of logarithmic aggregations.
    \end{itemize}
\end{restatable}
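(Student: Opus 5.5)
Item~2 is the easier part, so I will handle it first. Let $\agg=(\agg_1,\ldots,\agg_a)$ with each $\gamma(\agg_i)$, and fix $f,g=O(\log n)$. The fractional bit-length of a concatenated output is the sum of the per-component bit-lengths. Maximizing over $M$ therefore gives $S_{\agg}(n,f,g)\le\sum_{i\in[a]}S_{\agg_i}(n,f,g)=O(\log n)$, since $a$ is fixed.

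For the common-denominator condition, take $M=\lmulti\agg(M_1),\ldots,\agg(M_n)\rmulti$. Then $\cd(M)$ divides $\prod_{i\in[a]}\cd(\lmulti\agg_i(M_1),\ldots,\agg_i(M_n)\rmulti)$, because the lcm of a union divides the product of the lcms. Bit-lengths of a product add up to an additive $O(a)$, so $S^{\cd}_{\agg}\le\sum_i S^{\cd}_{\agg_i}+O(a)=O(\log n)$.

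For item~1, fix $p_1,p_2$ of degrees $d_1,d_2$ with rational coefficients, and let $D_1,D_2$ be the common denominators of their coefficients. I work per dimension; the vector case follows by item~2 or by summing over $d'$ coordinates. Let $M$ consist of $n$ rationals, each of bit-length $\le k=f(n)=O(\log n)$, with common denominator $q=\cd(M)$ of bit-length $\le\ell=g(n)=O(\log n)$, so $q\le 2^{\ell}=\poly(n)$. Write each $x\in M$ as $x=a_x/q$ with $|a_x|\le |x|\,q\le 2^{k}2^{\ell}=\poly(n)$.

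The key step is a uniform denominator bound. Each $p_1(x)$ can be written as an integer over $D_1q^{d_1}$, with numerator bounded by $\poly(n)$ because $d_1$ is fixed. Hence $s=\sum_{x\in T}p_1(x)$ has the form $A/(D_1q^{d_1})$ with $|A|\le n\cdot\poly(n)$. Applying $p_2$, we get $p_2(s)=B/(D_2(D_1q^{d_1})^{d_2})$ with $|B|\le\poly(n)$. Every bound here is a fixed-degree polynomial in $n$, $2^{k}$ and $2^{\ell}$, hence $\poly(n)$.

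Reducing a fraction only shrinks its numerator and denominator. So $\bitlenfr(p_2(s))=O(\log n)$, which gives condition~1.

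For condition~2, the same computation shows that every $\agg(M_i)$ with $M_i\subseteq M'$ has denominator dividing the single number $Q=D_2(D_1q^{d_1})^{d_2}$. This $Q$ is independent of the subset $M_i$ and of $i$. Therefore $\cd(\lmulti\agg(M_1),\ldots,\agg(M_n)\rmulti)$ divides $Q=\poly(n)$, and its bit-length is $O(\log n)$.

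The main obstacle is this second condition. It requires that $n$ different sub-aggregations not accumulate unrelated denominators, which is exactly what fails for $\mean$. The argument above avoids the problem because the only denominators that arise are fixed powers of the common input denominator $q$ together with the constants $D_1,D_2$; no quantity depending on $|T|$ is ever divided by.

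A minor care point is the encoding convention. The $O(\log n)$ integer encoding contributes at most a constant factor to bit-lengths, so bounds on $|\text{numerator}|$ and $|\text{denominator}|$ by $\poly(n)$ translate directly into $O(\log n)$ bit-lengths.
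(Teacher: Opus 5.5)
Your proof is correct and follows the paper's argument. Item~2 is handled by summing component bit-lengths and bounding the joint common denominator by the product of the per-component ones. Item~1 rests on a single denominator $D_2(D_1q^{d_1})^{d_2}$, the paper's $q=q_{p_1}^{k_2}\prod_i s_i$, which works for every subset $T\subseteq M'$ simultaneously.

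The only difference is in condition~1 of item~1. You bound the numerator and denominator directly from this representation instead of chaining \Cref{lem:logpoly} and \Cref{lem:sumlog}. This has the small advantage of establishing up front the common-denominator hypothesis that \Cref{lem:sumlog} requires.
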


\begin{restatable}{example}{examlogarithmic}\label{exam:logarithmic}
    The following common aggregations are logarithmic:
    \begin{itemize}
        \item [1.] $\sum$.
        \item [2.] Selection of $k$ elements, for a fixed $k\in\Nat$, by any criteria e.g. highest; lowest; quintile.
        \item [3.] $k$-bins $\agg$ bin-aggregation, for a fixed $k\in\Nat$ and logarithmic aggregation $\agg$.
    \end{itemize}
\end{restatable}

\subsubsection*{Main Result}
Our main result (\Cref{thm:main}) is that for every MP-GNN $N$ comprising (only)
logarithmic aggregations, the distinguishing-power of $N$ is polynomial i.e.
$$N_{\dpower}(\CG_\CB(n))=\poly(n),\ N_{\gdpower}(\CG_\CB(n))=\poly(n)$$
To prove it, we combine \Cref{lem:expobserve} (proved above) which shows that $N_{\dpower}(\CG_\CB(n)),N_{\gdpower}(\CG_\CB(n))\leq 2^{L_N(n)}$, with \Cref{lem:agg_info_sub_linear} which we proceed to prove and which shows that for MP-GNNs with logarithmic aggregations it holds that $L_N(n)=O(\log n)$.

The output of the aggregation of each layer in an MP-GNN depends on the output of the computation steps preceding it, hence, in order to calculate the total aggregations' output-complexity we need to calculate the intermediate-value complexity through the MP-GNN's computation steps.
Before considering the complete MP-GNN's computation, we first observe the output-complexity of a single MLP. One may see why the following is true, as an MLP's effect on the magnitude of the input is limited - it is \emph{Lipschitz continuous}, and also its effect (using $\relu$ activation) on the denominator of input values is bounded. Nevertheles, proof details can be found in the appendix.
\begin{restatable}{lemma}{lemmlplinear}\label{lem:mlp_linear}
Let $F$ be an MLP of input dimension $d$, we define $S_F:\Nat\rightarrow\Nat$ to be the output-size complexity of $F$, that is,
$\;\;S_F(k) \coloneqq \max(\bitlenfr(F(x)) : x\in\Rat^{d}, \bitlenfr(x)=k)$.

In addition, we define $S^{\cd}_{\agg}:\Nat^3\rightarrow\Nat$ to be the complexity of the common denominator of $n$ applications of $F$ on elements of a multiset of $n$ vectors, bit-length $k$ per vector, and multiset-common-denominator of bit-length $\ell$, that is,
$$S^{\cd}_{F}(n,k,\ell) \coloneqq 
\max\Big(\bitlen\big(\cd(\lmulti F(x_i)\rmulti_{i\in[n]})\big) : \lmulti x_i \rmulti_{i\in[n]} \in\multiset{\Rat^d_k}{n},\ \bitlenfr\big(\cd(\lmulti x_i \rmulti_{i\in[n]})\big)\leq\ell\Big)$$
Then:
\begin{itemize}
    \item [1.] $S_F(k) = O(k)$.
    \item [2.] For every $f:\Nat\rightarrow\Nat$ such that $f(n)=O(\log n)$ it holds that 
    $S^{\cd}_{F}(n,k,f(n))=O(\log n)$.
\end{itemize}
\end{restatable}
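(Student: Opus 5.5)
The plan is to argue layer by layer, following each entry through the MLP and tracking two quantities: its numerator magnitude and its denominator. The MLP $F$ is fixed, so all of its weights and biases are rationals with fixed numerators and denominators. Let $D$ be the least common denominator of all weights and biases of $F$, let $W$ be the largest absolute value of any entry, and let $h$ be the maximal layer width. Consider an input $x$ with $\bitlenfr(x)=k$. Each entry of $x$ has numerator and denominator of absolute value at most $2^{k}$.

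For part 1, write $q=\cd(x)$, which is at most $2^{k\cdot d}$ in general; more usefully, $\bitlen(q)\le k$ since $q$ divides the product of the entry denominators. Then $x=y/q$ with $y$ an integer vector whose entries have absolute value at most $2^{O(k)}$. By induction on the layers of $F$, the pre-activation vector after layer $i$ has the form $z_i/(q\,D^i)$ with $z_i$ integral. Applying $\relu$ entrywise preserves this form, since $\max(0,\cdot)$ keeps the same denominator. The numerators grow by at most a factor $h\,W\,D$ plus an additive bias term at each layer, so $|z_m|\le (hWD+1)^m\cdot 2^{O(k)}$. Hence every output entry has denominator dividing $qD^m$ and numerator of bit-length $O(k)+O(1)$. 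The reduced form can only be smaller, and since the output dimension is fixed, $S_F(k)=O(k)$.

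For part 2, the same computation applied to each $x_i$ gives a common denominator $Q=\cd(\lmulti x_i\rmulti)$ with $\bitlenfr(Q)\le f(n)$. Each $F(x_i)$ has entries whose reduced denominators divide $Q D^m$, because we may write $x_i=y_i/Q$ and run the induction with $Q$ in place of $q$. The key point is that every output therefore divides the single number $QD^m$. Consequently $\cd(\lmulti F(x_i)\rmulti)$ divides $QD^m$, and its bit-length is at most $f(n)+m\log D+O(1)=O(\log n)$, independent of the number $n$ of applications and even of $k$.

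The only delicate step is making the induction statement precise: we need that $\relu$ and affine maps with denominator $D$ map $\frac1{Q D^{i}}\Int^{h}$ into $\frac1{QD^{i+1}}\Int^{h}$, and that the bit-length of a reduced fraction is bounded by that of any unreduced representation. Both are elementary. A minor subtlety is the conversion between $\bitlenfr$ of a vector and a bound on the common denominator of its entries. For part 1 this is a sum of $d$ entry denominators' bit-lengths bounded by $k$, so it costs only a constant factor, since $d$ is fixed.
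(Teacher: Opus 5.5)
Your proof is correct and follows essentially the paper's route: bound every denominator by the parameters' common denominator times the input's common denominator, and every numerator by constant-factor growth per layer, using that the depth is fixed. You package this as the single invariant $z_i/(QD^i)$, where the paper proves a per-layer bound and then composes. One point in your favour: your part 2 bound $QD^m$ is the right one, whereas the paper's claim $q^{F,M}_c\leq q^{F}_c\cdot\cd(M)$ drops the exponent $m$ and fails for depth $\ge 2$ (e.g.\ $x\mapsto\frac12\relu(\frac12 x)$ at $x=1$); this is asymptotically harmless since $m$ is fixed.
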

We proceed to state our main lemma. Note that, referring to $L_N$, it considers the domain $\CG_\CB$ where the features are boolean thus their bit-length trivially does not exceed $O(\log n)$.
\begin{lemma}\label{lem:agg_info_sub_linear}
    Let $N=(\mlp_1,\agg_1,\msg_1),\ldots,(\mlp_m,\agg_m,\msg_m),(\mlp_R,\agg_R)$ be an MP-GNN, possibly with a final-readout layer, then: If all the aggregations are logarithmic then the total-information complexity of $N$ is logarithmic, formally $$\forall i\; \gamma(\agg_i)\Rightarrow L_N(n)=O(\log n),\;\;\forall i\; \gamma(\agg_i)\wedge\gamma(\agg_R)\Rightarrow L^{(E)}_N(n)=O(\log n)$$
\end{lemma}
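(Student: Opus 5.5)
We argue by induction on the layer index $t=0,1,\ldots,m$, carrying a joint invariant for all vertices of a graph $G\in\CG_\CB(n)$: \emph{(a)} every vertex value $v^{(t)}$ has fractional bit-length $O(\log n)$; and \emph{(b)} the common denominator of the multiset $\lmulti v^{(t)} : v\in V(G)\rmulti$ has bit-length $O(\log n)$. The constants may depend on $N$ and $t$ but not on $G$ or $n$. Since $m$ is fixed, the constants stay bounded after $m$ steps. For $t=0$ both parts are trivial, because the features are in $\{0,1\}$ and the common denominator is $1$.

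For the inductive step from $t-1$ to $t$ we follow the computation in three stages.

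\textbf{Messages.} We view $\msg_t$ as an MLP on the concatenated input $(v^{(t-1)},w^{(t-1)})$. The bit-length of this input is $O(\log n)$, and its common denominator divides the product of two $O(\log n)$-bit common denominators. Item 1 of \Cref{lem:mlp_linear} then gives each message bit-length $O(\log n)$. For the common denominator of all messages in the graph (at most $n^2$ of them, which is still polynomial in $n$), we apply item 2 of \Cref{lem:mlp_linear}. If needed, we apply it to the multiset of all ordered edge-pairs, whose size is at most $n^2$, so that $O(\log n^2)=O(\log n)$.

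\textbf{Aggregations.} Fix a vertex $v$. Its message multiset has size at most $n$ (pad or treat sizes $\le n$; we assume $S_\agg$ is monotone in the size argument, or take the max over sizes up to $n$). Each element has bit-length $O(\log n)$, and the multiset's common denominator has $O(\log n)$ bits because it divides the global one. Condition 1 of \Cref{def:sublinear} gives $\bitlenfr(\agg_t(\cdot))=O(\log n)$. This is exactly the $t$-th entry of $I_N(G,v)$.

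For the common denominator across all vertices we use condition 2. All per-vertex message multisets are sub-multisets of a single multiset $M'$ containing all messages, with common denominator of bit-length $O(\log n)$. The aggregations of the $n$ vertices are therefore $n$ aggregations on subsets of $M'$, which is precisely the quantity bounded by $S^{\cd}_{\agg_t}$.

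\textbf{Combination.} Applying $\mlp_t$ to $(v^{(t-1)},\agg_t(\ldots))$ and using \Cref{lem:mlp_linear} again (both items, with the combined common denominator being a product of two $O(\log n)$-bit numbers) restores invariants (a) and (b) for $t$.

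Summing the $m$ entries gives $\bitlen(I_N(G,v))=O(m\log n)=O(\log n)$, hence $L_N(n)=O(\log n)$. For the readout, the final values $v^{(m)}$ satisfy (a) and (b), so condition 1 applied to $\agg_R$ on the $n$ final values yields $L^{(E)}_N(n)=O(\log n)$.

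The main obstacle is the size mismatch in condition 2 and in \Cref{lem:mlp_linear}(2). The definitions fix the multiset $M'$ to have exactly $n$ elements, whereas the union of all messages has up to $2|E|\le n^2$ elements. We handle this by reparametrizing with $n'=n^2$: since $\log n'=2\log n$, the $f,g=O(\log n)$ hypotheses translate to $O(\log n')$. We also need to argue that a bound holding for exactly $n'$ elements and $n'$ sub-aggregations implies one for fewer. This follows by padding $M'$ with duplicated elements (which changes neither the common denominator nor the per-vector bit-length) and by repeating aggregations (which does not change the common denominator of their outputs).
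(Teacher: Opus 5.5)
Your proposal is correct and follows the paper's proof essentially step for step: a layer-by-layer induction carrying both the per-vertex bit-length and the bit-length of the graph-wide common denominator through the message, aggregation and combination stages via \Cref{lem:mlp_linear} and the two conditions of \Cref{def:sublinear}, then summing over the $m$ layers and applying $\agg_R$ to the final values. Your handling of the size mismatches is more careful than the paper's, which passes over them silently; in particular, a vertex of small degree can receive messages of $\Theta(\log n)$ bits, a case \Cref{def:sublinear} does not literally constrain since it bounds only multisets of size exactly $n$, so your assumption that the bound on $S_{\agg}$ holds for all sizes up to $n$ is genuinely needed, and the paper's argument relies on it implicitly too.
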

\begin{proof}
    For $l \in [m]$ we define $S^{\msg}_{N^{(l)}},S^{\agg}_{N^{(l)}},S_{N^{(l)}}:\Nat\rightarrow\Nat$ the complexities of intermediate outputs throughout the operation of $N$: The output of 
    $\msg_l$, the output of $\agg_l$, and the output of layer $l$ i.e. the output of $\mlp_l$. Note that these are not the complexities of the standalone functions - which we have defined and discussed earlier.
    % Similarly, we define $S^{\Fd,\msg}_{N^{(l)}},S^{\Fd,\agg}_{N^{(l)}},S^{\Fd}_{N^{(l)}}:\Nat\rightarrow\Nat$ the complexities of only the denominator
    Formally, 
    $$S^{\msg}_{N^{(l)}}(n)\coloneqq \max\Big(
     \bitlenfr\big(\msg_l(N^{(l-1)}(G,v),N^{(l-1)}(G,w))\big) : G\in\CG_\CB,\ |G|=n,\ v\in V(G),\ w\in N_G(v)\Big)$$
     $$S^{\agg}_{N^{(l)}}(n)\coloneqq\max\Big(\bitlenfr\big(\agg_l\lmulti \msg_l(N^{(l-1)}(G,v),N^{(l-1)}(G,w))\mid w\in N_G(v)\rmulti\big):$$
        $$G\in\CG_\CB,\ |G|=n,\ v\in V(G)\Big),\;\;S_{N^{(l)}}(n)\coloneqq \max\Big(\bitlenfr(N^{(l)}(G,v)):
    G\in\CG_\CB,\ |G|=n,\ v\in V(G)\Big)$$        
    In addition, we define $S^{\cd,\msg}_{N^{(l)}},S^{\cd,\agg}_{N^{(l)}},S^{\cd}_{N^{(l)}}:\Nat\rightarrow\Nat$ the complexities of the common denominator of intermediate values across the vertices, throughout the operation of $N$. Formally, 
    $$S^{\cd,\msg}_{N^{(l)}}(n)\coloneqq \max\bigg(
     \bitlen\Big(\cd\big(\lmulti\msg_l(N^{(l-1)}(G,v),N^{(l-1)}(G,w)) \mid v\in V(G),\ w\in N_G(v)\rmulti\big)\Big) : $$$$G\in\CG_\CB,\ |G|=n\bigg),\;\;S^{\cd,\agg}_{N^{(l)}}(n)\coloneqq\max\bigg(\bitlenfr\Big(\cd\big(\lmulti\agg_l\lmulti \msg_l(N^{(l-1)}(G,v),
        N^{(l-1)}(G,w))\mid$$$$ 
        w\in N_G(v)\rmulti : v\in V(G)\rmulti\big)\Big) :G\in\CG_\CB,\ |G|=n\bigg)        
    $$
   $$   S^{\cd}_{N^{(l)}}(n)\coloneqq \max\Big(\bitlen\big(\cd(\lmulti N^{(l)}(G,v):v\in V(G)\rmulti)\big):
    G\in\CG_\CB,\ |G|=n\Big)$$    
    We prove by induction on $l$ that $\forall\ l\in [m]\;\ S^{\msg}_{N^{(l)}},S^{\agg}_{N^{(l)}},S_{N^{(l)}}=O(\log n)$.    
    As the complexity of a sum of a fixed number of $O(\log n)$-complexity functions is $O(\log n)$, 
    and by definition 
    $$
     L_N(n)=\max\Big(\Sigma_{l\in[m]}\bitlen(\agg_l\lmulti
     \msg_l\big(N^{(l-1)}(G,v),N^{(l-1)}(G,w)\big)\rmulti_{w\in N_G(v)}) :$$$$ 
     G\in\CG_\CB,\ |G|=n,\ v\in V(G)\Big)$$
    , by proving the induction we would have proven that $L_N(n)=\log n$.
    As part of the induction proof we also proof by induction on $l$ that $\forall\ l\in [m]\;\ S^{\cd,\msg}_{N^{(l)}},S^{\cd,\agg}_{N^{(l)}},S^{\cd}_{N^{(l)}}=O(\log n)$.
    For $l=1$, by the initial features all being in $\{0,1\}$, clearly $\exists c\in\Nat : S^{\msg}_{N^{(1)}}(n),S^{\cd,\msg}_{N^{(1)}}(n) \leq c$
    , hence trivially $S^{\msg}_{N^{(1)}}(n),S^{\cd,\msg}_{N^{(1)}}(n)=O(\log n)$. Then, by assumption on $\agg_1$ we have $S^{\agg}_{N^{(1)}}(n),S^{\cd,\agg}_{N^{(1)}}(n)=O(\log n)$.
    By \Cref{lem:mlp_linear} and since the complexity of a composition of a function of complexity $O(n)$ over a function of complexity $O(\log n)$ is $O(\log n)$, we have that $S_{N^{(1)}}(n),S^{\cd}_{N^{(1)}}(n)=O(\log n)$.
    Assuming correctness for $l=k<m$ we prove for $l=k+1$. 
    By by the induction assumption on $S_{N^{(k)}}$ and by \Cref{lem:mlp_linear}, $S^{\msg}_{N^{(k+1)}}(n)$ is the complexity of a composition of an $O(n)$-complexity function over the concatenation of two $O(\log n)$-complexity functions, which is $O(\log n)$. Also, by the induction assumption on $S^{\cd}_{N^{(k)}}$ and by \Cref{lem:mlp_linear}
    $S^{\cd,\msg}_{N^{(k+1)}}(n)=O(\log n)$. By assumption on $\agg_{k+1}$, and by $S^{\msg}_{N^{(k+1)}}(n),S^{\cd,\msg}_{N^{(k+1)}}(n)=O(\log n)$, we have $S^{\agg}_{N^{(k+1)}}(n),S^{\cd,\agg}_{N^{(k+1)}}(n)=O(\log n)$.
    Finally, by the latter and by \Cref{lem:mlp_linear} we have $S_{N^{(k+1)}}(n),S^{\cd}_{N^{(k+1)}}(n)=O(\log n)$.

    For $L^{(E)}_N$, by definition 
     $L^{(E)}_N(n)\coloneqq\max\Big(
        \bitlenfr(\agg_R\lmulti N(G,v)\rmulti_{v\in V(G)}):G\in\CG_\CB,\ |G|=n\Big)$, hence
     by $S_{N^{(m)}}(n),S^{\cd}_{N^{(m)}}(n)=O(\log n)$, and by assumption on $\agg_R$, we have $L^{(E)}_N(n)=O(\log n)$
\end{proof}
\begin{remark}\label{rem:rem1}
    The line of proof of \Cref{lem:agg_info_sub_linear} works for every message and combination functions with output-size complexity, and outputs-common-denominator complexity, $O(n)$ (with $n$ being the function's input size, as well as the outputs-multiset size), not only for $\relu$-activated MLPs. Hence, the guarantee that $L_N(n)=O(\log n)$ (with $n$ being the input-graph size), and subsequently \Cref{thm:main}, hold for all MP-GNNs architectures comprising  message and combination functions that have these properties.
\end{remark}
\begin{theorem}\label{thm:main}
    Let $N=(\mlp_1,\agg_1,\msg_1),\ldots,(\mlp_m,\agg_m,\msg_m),(\mlp_R,\agg_R)$ be an MP-GNN, possibly with a final-readout layer, then: If all aggregations are logarithmic then the distinguishing-power of $N$ is polynomial. Formally,
        $$\forall i\; \gamma(\agg_i)\Rightarrow N_{\dpower}(\CG_\CB(n))=\poly(n),\ \forall i\; \gamma(\agg_i)\wedge\gamma(\agg_R)\Rightarrow N_{\gdpower}(\CG_\CB(n))=\poly(n)$$
\end{theorem}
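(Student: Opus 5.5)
The theorem follows by chaining \Cref{lem:expobserve} with \Cref{lem:agg_info_sub_linear}.

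Assume all $\agg_i$ are logarithmic. By \Cref{lem:agg_info_sub_linear}, $L_N(n)=O(\log n)$. So there are constants $c,n_0$ with $L_N(n)\leq c\log n$ for all $n\geq n_0$; here $\log$ is base $2$, and a different base only rescales $c$. \Cref{lem:expobserve} then gives
$$N_{\dpower}(\CG_\CB(n))\leq 2^{L_N(n)+1}\leq 2\cdot 2^{c\log n}=2n^{c}$$
for all $n\geq n_0$.

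For the finitely many $n<n_0$, $N_{\dpower}(\CG_\CB(n))$ is finite, since $\CG_\CB(n)$ is a finite set of graphs. Adding the constant $\max_{n<n_0}N_{\dpower}(\CG_\CB(n))$ to the bound therefore covers all $n$, and $N_{\dpower}(\CG_\CB(n))=\poly(n)$.

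The graph-level statement is proved the same way. If in addition $\gamma(\agg_R)$ holds, then \Cref{lem:agg_info_sub_linear} gives $L^{(E)}_N(n)=O(\log n)$. \Cref{lem:expobserve} then yields $N_{\gdpower}(\CG_\CB(n))\leq 2^{L^{(E)}_N(n)}\leq n^{c'}$ for large $n$, with small $n$ handled by finiteness as before.

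There is no real obstacle here; all the substantive work sits in the two lemmas. The only point to check is the convention that $O(\log n)$ means an eventual bound, so that exponentiating gives a polynomial bound up to a constant factor.
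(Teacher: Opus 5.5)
Your proposal is correct and matches the paper's proof, which likewise combines \Cref{lem:agg_info_sub_linear} (giving $L_N(n),L^{(E)}_N(n)=O(\log n)$) with \Cref{lem:expobserve}. Your explicit handling of the constant $c$, the extra factor $2$, and the finitely many small $n$ simply spells out the paper's one-line step $2^{O(\log n)}=\poly(n)$.
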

\begin{proof}
By assumption and \Cref{lem:agg_info_sub_linear} we have $L_N(n)=O(\log n),\ L^{(E)}_N(n)=O(\log n)$, hence by \Cref{lem:expobserve} we have $N_{\dpower}(\CG_\CB(n))=2^{O(\log n)}=\poly(n),\ N_{\gdpower}(\CG_\CB(n))=2^{O(\log n)}=\poly(n)$.
\end{proof}

\subsubsection*{Comparison to Color Refinement}
The absolute-terms upper bounds in \Cref{thm:main} are meaningful on their own, considering that the number of non-isomorphic graphs is super-exponential, $|\CG_\CB(n)|=2^{\Omega(n^2-n\log n)}$.
In previous studies, the distinguishing-power of MP-GNNs has been compared to the distinguishing-power of Color Refinement (CR), where it was shown to either match it or not, depending on the setting, with no quantification given for the gap in the latter case. 
As CR is meaningful and well-studied, we proceed to put \Cref{thm:main} in its perspective. We compare the distinguishing-power of 
logarithmic-aggregations MP-GNNs to the distinguishing-power of merely two iterations of CR, i.e. to CR$^{(2)}_{\dpower}(\CG_\CB)$. We observe the following.
\begin{lemma}\label{lem:notwocr}
    Let $n\in\Nat$, then
    $\text{CR}^{(2)}_{\dpower}(\CG_\CB(2n+3))\geq \binom{2n}{n},\; \text{CR}^{(2)}_{\gdpower}(\CG_\CB(2n+3))\geq \binom{2n}{n}$
\end{lemma}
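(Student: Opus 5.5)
We construct, for every subset $A\subseteq[2n]$ with $|A|=n$, a boolean-featured graph $G_A$ on $2n+3$ vertices, and show that the map $A\mapsto G_A$ yields pairwise distinct CR$^{(2)}$ colors. Both for a designated vertex and for the whole graph, this gives $\binom{2n}{n}$ distinct colors. The idea is to have one root vertex $r$ whose neighbors have pairwise different degrees, and to let the subset $A$ be encoded by which of those neighbors carry feature $1$. After one iteration each neighbor's color records its degree and its feature. After the second iteration, $r$'s color contains the multiset of (feature, degree) pairs of its neighbors, which determines $A$.

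\textbf{Construction.} The vertex set is $\{r,h\}\cup\{u_1,\dots,u_{2n}\}\cup\{x\}$, where the extra vertices are used to realize distinct degrees. A simpler way to get distinct degrees: take $u_1,\dots,u_{2n}$ forming the ``threshold'' pattern $u_iu_j\in E$ iff $i+j>2n$ (possibly with small adjustments), which makes the degrees $\deg(u_i)$ distinct up to one collision. The two auxiliary vertices are then used to break that collision and to make $r$ distinguishable from the $u_i$. Concretely, we add $r$ adjacent to all $u_i$, plus a pendant vertex $h$ attached to $r$, plus a vertex $x$ attached to the collided $u_i$. The features are set so that $Z(u_i)=1$ iff $i\in A$, with $Z(r)=Z(h)=Z(x)=0$. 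The underlying unfeatured graph is identical for every $A$; only the features vary, and the degrees do not depend on $A$.

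\textbf{Vertex claim.} $\cre^{(1)}(u_i)$ contains $Z(u_i)$ and, through the size of its neighbor multiset, $\deg(u_i)$. Since the degrees are pairwise distinct, $\cre^{(2)}_{G_A}(r)$ contains the multiset $\lmulti(Z(u_i),\deg(u_i))\rmulti_i$, from which $A=\{i: Z(u_i)=1\}$ is recovered. Hence the $\binom{2n}{n}$ vertices $r\in V(G_A)$ have pairwise distinct colors, which gives the first inequality.

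\textbf{Graph claim.} We must show $\cre^{(2)}(G_A)\neq\cre^{(2)}(G_B)$ for $A\neq B$. It suffices that $r$'s color determines $A$ and that $r$ is identifiable within the multiset, for instance because $r$ is the unique vertex of its degree $2n+1$, which its color records. Then the multiset of colors contains exactly one color of that degree, and that color determines $A$.

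\textbf{Main obstacle.} The delicate step is the degree bookkeeping. With $2n$ vertices in a simple graph, the degrees within the $u$-block cannot all be distinct (at most $2n-1$ values), which is why the auxiliary vertex $x$ and the exact edge pattern are needed. We also need $r$'s degree to differ from every $u_i$'s. We will fix the threshold pattern explicitly and verify by direct counting that $\deg(u_i)$, including the edge to $r$ and the possible edge to $x$, are pairwise distinct and all less than $2n+1=\deg(r)$. If the counting fails by one, we will adjust by letting $h$ also attach to one $u_i$.
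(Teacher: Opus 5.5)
\textbf{Gap: the degree bookkeeping, which you defer, fails in the construction you give.} Your encoding idea is different from the paper's and can work: one fixed graph, $2n$ vertices told apart by their degrees, and $A$ written into their boolean features. The problem is the threshold pattern $u_iu_j\in E$ iff $i+j>2n$. It gives $u_i$ block degree $i$ for $i\le n$ and $i-1$ for $i\ge n+1$. That is the sequence $1,2,\dots,n,n,n+1,\dots,2n-1$, which has no gaps and a single repeat at $u_n,u_{n+1}$. Attaching $x$ to $u_{n+1}$ (or to $u_n$) raises that vertex to $n+1$, which now equals $u_{n+2}$. Because the sequence has no gaps, each single extra edge only pushes the collision one step up, and your fallback edge from $h$ does the same.

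A count shows no local patch exists. Not counting the common edge to $r$, the $u_i$ degrees must be pairwise distinct positive integers, so they sum to at least $1+2+\dots+2n=2n^2+n$. The threshold block contributes only $2n^2$. Hence $x$ and $h$ must together send at least $n$ edges into the block, not one or two. As written, for $n\ge 3$ the $u_i$ do not have pairwise distinct degrees, so $A$ cannot be read off the multiset of pairs $(Z(u_i),\deg(u_i))$.

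\textbf{Repair.} Shift a whole block instead: make $x$ adjacent to all of $u_{n+1},\dots,u_{2n}$. This gives $\deg(u_i)=i+1$ for every $i$, which are pairwise distinct. It also gives $\deg(u_{2n})=2n+1=\deg(r)$, so add the edge $rx$ as well. Then $r$ is the unique vertex of degree $2n+2$, with $\deg(x)=n+1$ and $\deg(h)=1$. The neighbors $h$ and $x$ of $r$ contribute the same pairs $(0,1)$ and $(0,n+1)$ for every $A$, so CR$^{(2)}$ at $r$ still determines $A$. For the graph claim you do not even need $r$ to be identifiable. The CR$^{(2)}$ color of $G$ determines the multiset of pairs $(Z(y),\deg(y))$ over all vertices $y$, and its $r,h,x$ part does not depend on $A$. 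With this fix your proof is complete.

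\textbf{Comparison with the paper.} The paper keeps all features equal to $1$ and varies the structure of a two-level star. The center's $n$ first-level neighbors realize an arbitrary multiset of $n$ degrees from $n+1$ possible values, which again gives $\binom{2n}{n}$ options. That route never needs pairwise distinct degrees, which is exactly the bookkeeping your feature-based route has to pay for.
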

\begin{proof}
We look at a two-level star graph of size $2n+3$, where we denote the center by $v$, the vertices of the first level by $u_i,i\in[n]$ and those of the second level by $w_i,i\in[n]$.
We define $$\CK_n\coloneqq\{(k_0,\ldots\,k_n),k_i\in [0..n] ,\sum(k_i)=n\}$$
all the possible choices, with repetition, of $n$ elements from $n+1$ types.
For $K\in\CK_n$ we define $G_K\in\CG_\CB$ to be the graph where $v$ is connected to all $u$'s, and there are $k_i$ of the $u$'s that are connected to $i$ of the $w$'s. In other words $v$ has $k_i$ neighbors of degree $i$ (+1). In addition, $v$ is connected to two vertices $s_1,s_2$ to make its degree higher than any of the $u$'s and $w$'s. Formally, $G_{K}$ is defined as follows: $V(G)=\{v\},\{s_1,s_2\}\{u_1,\ldots,u_n\},\{w_1,\ldots,w_n\}$, 
$$E(G)=\{vs_1,vs_2\},\{vu_i : i\in [n]\},\{u_iw_j : j\in [n], i > \Sigma^{j-1}_{h=0}k_h\},\;\forall x\in V(G)\; Z(G)(x)=1$$
For example, let $K\in\CK_4, K=\{1,2,0,1,0\}$, then $G_K$'s vertices and edges are $V(G_K)=\{v,s_1,s_2\},\{u_i\}_{i\in[4]},\{w_i\}_{i\in[4]}$, $E(G_K)=\{vs_1,vs_2\},\{vu_i : i\in [4]\},\{u_2w_1,u_3w_1,u_4w_1,u_4w_2,u_4w_3\}$. (See \Cref{figure:g_k_illustration} for an illustration)
\begin{figure*}[b]
\begin{center}

\begin{minipage}{.52\linewidth}
    \begin{center}
\caption{
A depiction of $G_K, K\in\CK_4, K=\{1,2,0,1,0\}$. One $u$ vertex is connected to zero $w$ vertices, two are connected to one, zero connected to two, and one $u$ vertex is connected to three $w$ vertices. The initial feature of all vertices is $1$. 
}
\label{figure:g_k_illustration}
    \end{center}
\end{minipage} \hfil\begin{minipage}{.46\linewidth}
    \begin{center}            
\includegraphics[width=\linewidth]{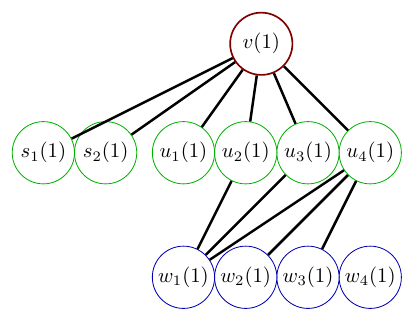}         
    \end{center}
\end{minipage} 
\end{center}
\end{figure*}

For $n\in\Nat$ we define $\CG_\CK(2n+3)\coloneqq \{G_K : K\in\CK_n\}\subset\CG_\CB$ the set of all graphs of the form above, of size $2n+3$.
    Observe that
        $|\CG_\CK(2n+3)|=\binom{2n}{n}$, as it is the number of options to choose with repetition $n$ elements - the number of $u$ vertices - out of $n$ possible types - the possible number of neighbors. In addition, the difference in connectivity of the $u$ layer and the $w$ layer, between every $G_K\neq G_{K'}\in\CG_\CK(2n+3)$, implies that the color of $ is unique (v)\neq\cre^{(2)}_{G'}(v)$. Formally, 
    $\Big(\forall n\in\Nat\ \forall G\neq G'\in\CG_\CK(2n+3)\;\ 
    \exists i: |\{j : |N_G(w_j)|=i\}|\neq|\{j : |N_{G'}(w_j)|=i\}|\Big)\Rightarrow\Big(\forall n\in\Nat\ \forall G\neq G'\in\CG_\CK(2n+3)\;\ \cre^{(2)}_{G}(v)\neq\cre^{(2)}_{G'}(v)\Big)$, 
    hence $\text{CR}^{(2)}_{\dpower}(\CG_\CB(2n+3))\geq\binom{2n}{n}$. Finally, the color of $v$ is unique also compared to that of $u$ and $w$ in all graphs (of size (2n+3)), as their degree is $\leq n+1$. Hence, by definition of $\cre$ we have $\forall n\in\Nat\ \forall G,G'\in\CG_\CK(2n+3)\;\ \cre^{(2)}(G)\neq\cre^{(2)}(G')$, 
    hence $\text{CR}^{(2)}_{\gdpower}(\CG_\CB(2n+3))\geq\binom{2n}{n}$
\end{proof}

Combined with \Cref{lem:expobserve}, we arrive at the following sufficient condition for an MP-GNN having weaker distinguishing-power than CR$^{(2)}$, and combined with \Cref{thm:main} we have the following measure of the gap between the power of logarithmic-aggregations MP-GNNs and that of CR$^{(2)}$.
\begin{corollary}\label{cor:subweakerthancr}
    Let $N=(\mlp_1,\agg_1,\msg_1),\ldots,(\mlp_m,\agg_m,\msg_m),(\mlp_R,\agg_R)$ be an MP-GNN, possibly with a final-readout layer, then:
    \begin{itemize}
        \item [1.] If there exists $n\in\Nat$ such that $L_N(2n+3)<\ceil{\log \binom{2n-1}{n-1}}$ then there are vertices that are distinguishable by CR$^{(2)}$ and not by $N$. Formally, $\exists n\in\Nat : L_N(2n+3)<\ceil{\log \binom{2n-1}{n-1}}\Rightarrow$
        $$\exists G,G'\in\CG_\CB,\ v\in V(G),\ v'\in V(G'):
    \cre^{(2)}_{G}(v)\neq \cre^{(2)}_{G'}(v') \wedge N(G,v)=N(G',v')$$
    In particular, 
$\forall i\; \gamma(\agg_i)\Rightarrow \lim_{n\rightarrow\infty} \frac{N_{\dpower}(\CG_\CB(n))}{\text{CR}^{(2)}_{\dpower}(\CG_\CB(n))} =0$
    
        \item [2.] For distinguishing graphs, 
        $$\exists n\in\Nat : L^{(E)}_N(2n+3)<\ceil{\log \binom{2n-1}{n-1}}\Rightarrow \exists G,G'\in\CG_\CB: \cre^{(2)}(G)\neq \cre^{(2)}(G') \wedge N(G)=N(G')$$
$$\forall i\; \gamma(\agg_i)\wedge \gamma(\agg_R)\Rightarrow \lim_{n\rightarrow\infty} \frac{N_{\gdpower}(\CG_\CB(n))}{\text{CR}^{(2)}_{\gdpower}(\CG_\CB(n))} =0$$
    \end{itemize}     
\end{corollary}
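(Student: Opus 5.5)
The plan is a pigeonhole argument comparing \Cref{lem:expobserve} with \Cref{lem:notwocr}, restricted to the family $\CG_\CK(2n+3)$ and to its distinguished center vertices. For item 1, fix $n$ with $L_N(2n+3)<\ceil{\log \binom{2n-1}{n-1}}$. First I would refine the counting in \Cref{lem:expobserve} to this family. Every center $v$ of every $G_K$ has initial feature $1$. Hence its final value $N(G_K,v)$ is determined by $I_N(G_K,v)$ alone, which removes the extra factor $2$ coming from the initial feature. The number of distinct final values on these centers is therefore at most the number of bit-strings of length at most $L_N(2n+3)$. If the encoding is such that lengths at most $L$ give at most $2^{L}$ distinct aggregation-value sequences, this is at most $2^{L_N(2n+3)}$. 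Otherwise I would absorb a constant factor, which is harmless in the limit statements.

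Second, compare with the $\binom{2n}{n}$ pairwise-distinct CR$^{(2)}$ colors of the centers given by \Cref{lem:notwocr}. Note that $\binom{2n}{n}=2\binom{2n-1}{n-1}$. The hypothesis gives $2^{L_N(2n+3)}\le 2^{\ceil{\log\binom{2n-1}{n-1}}-1}<\binom{2n-1}{n-1}<\binom{2n}{n}$. By pigeonhole, two distinct $K\neq K'$ then satisfy $N(G_K,v)=N(G_{K'},v)$ while $\cre^{(2)}_{G_K}(v)\neq\cre^{(2)}_{G_{K'}}(v)$, which is the required witness. The main obstacle is this bookkeeping of the exact threshold. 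I would try the refined count (initial feature fixed, strings of length at most $L$) first. If the constants do not align, the slack between $\binom{2n-1}{n-1}$ and $\binom{2n}{n}$ leaves a factor of $2$ to spare.

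For the limit in item 1, assume $\gamma(\agg_i)$ for all $i$. \Cref{thm:main} gives $N_{\dpower}(\CG_\CB(n))\le n^{c}$ for some constant $c$. \Cref{lem:notwocr} gives $\text{CR}^{(2)}_{\dpower}(\CG_\CB(2n+3))\ge\binom{2n}{n}\ge 4^n/(2n+1)$, which is exponential. For sizes $n'$ of the other parity, say $n'=2n+4$, I would pad $G_K$ with one isolated vertex of feature $0$. This keeps the centers' CR$^{(2)}$ colors pairwise distinct, so $\text{CR}^{(2)}_{\dpower}(\CG_\CB(n'))\ge 2^{\Omega(n')}$ for all $n'$. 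The ratio is then at most $\poly(n')/2^{\Omega(n')}\to 0$.

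Item 2 is identical with graphs in place of vertices. \Cref{lem:expobserve} gives $N_{\gdpower}(\CG_\CB(2n+3))\le 2^{L^{(E)}_N(2n+3)}<\binom{2n}{n}$. The graph colors $\cre^{(2)}(G_K)$ are pairwise distinct by \Cref{lem:notwocr}, and the same holds for the padded graphs, because the center's color still appears in the graph's multiset and determines $K$. Pigeonhole then yields $G\neq G'$ with $N(G)=N(G')$ but different CR$^{(2)}$ colors. For the limit, \Cref{thm:main} bounds $N_{\gdpower}$ polynomially under $\gamma(\agg_i)\wedge\gamma(\agg_R)$ against the exponential lower bound, exactly as in item 1.
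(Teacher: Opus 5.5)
Your proposal is correct and follows the paper's implicit argument: a pigeonhole comparison between the bound of \Cref{lem:expobserve} and the $\binom{2n}{n}=2\binom{2n-1}{n-1}$ pairwise-distinct CR$^{(2)}$ colors of \Cref{lem:notwocr}, where the threshold is chosen exactly so that $2^{L_N(2n+3)+1}<\binom{2n}{n}$ (so your refined count is optional), and, for the limits, the polynomial bound of \Cref{thm:main} against the same exponential lower bound. Your padding of $G_K$ with an isolated vertex to cover even sizes supplies a detail the paper leaves implicit, since \Cref{lem:notwocr} is only stated for sizes $2n+3$.
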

\section{Concluding Remarks}\label{sec:concremarks}
We have introduced an output-size complexity property for aggregation functions, satisfied by most of the reasonable aggregations that do not involve exponentiation or division by a graph-size-dependent value, and proved that it has the effect of restricting MP-GNN models to distinguish merely a polynomial number of equivalence classes. This applies both to distinguishing between vertices and distinguishing between graphs. Given that the number of non-isomorphic graphs is super-exponential, $2^{\Omega(n^2-n\log n)}$, that bound is significant.

We have noted that mean-aggregation is not a logarithmic aggregation. We conjecture that there is a set of graphs, distinguishable from each other by a single mean-aggregation MP-GNN model, of super-polynomial size. If so, it would highlight an important subtlety in the relation between expressivity, i.e. function approximation, and distinguishing-power: While each mean-aggregation model $N$ can be approximated (up to an $\epsilon$) by a sum-aggregation model \citep{rosenbluth2023some}, it may distinguish a higher number of graphs than any sum-aggregation model - necessarily mapping to infinitely-close values i.e. $\forall \epsilon>0\; \exists G,G'\in\CG_\CB : |N(G)-N(G')|<\epsilon$.

We proceeded to take a familiar perspective and considered the well-studied distinguishing-power of the Color Refinement algorithm, already known to upper-bound all MP-GNNs, as a reference point. We have observed that CR$^{(2)}$, i.e.  
merely 2 iterations of CR, is not only stronger than our general class of MP-GNNs, making CR$^{(1)}$ a tight bound \footnotemark[3] for it, but is relatively infinitely stronger, as it is at least exponential.
\footnotetext[3]{when the graph diameter does not exceed the number of CR iterations.}
This is in stark contrast to non-uniform distinguishing-power results \citep{XuHLJ19,morris2019weisfeiler, aamand2022exponentially}, as well as to uniform results for recurrent MP-GNNs \citep{rosenbluth2025one}.

A consequence of our results is that every function, in every function-class that is subsumed by logarithmic-aggregations MP-GNNs, does not distinguish more than a polynomial number of equivalence-classes.

To practice, an immediate implication of our results is that if the target function assumes (on the graph domain) a greater-than-polynomial number of values in the graph size then it is simply impossible for a logarithmic-aggregations MP-GNN model to even distinguish between all vertices or graphs that are assigned a different value by the function, let alone assign them the specific function's values.

While we focus on MP-GNNs with $\relu$-activated MLPs for message and combination functions, our results hold for all MP-GNNs comprising message and combination functions with output and outputs-common-denominator size-complexities of $O(n)$.

Our goal is to understand fundamental expressivity bounds of MP-GNNs - regardless of aggregations specifics. To that end, the following remain open for further research:
\begin{itemize}
    \item [1.] We have not addressed aggregations that have enough output bits to represent the number of all possible graphs. There, (full) CR distinguishing-power is potentially given for free - the aggregation function can simply implement CR, and the question to study is that of expressivity i.e. approximating a target function - computing a specific value for each vertex or graph. It is clear that the fixed number of MLP-runs in an MP-GNN with rational weights is a limiting factor, as a runtime-complexity upper bound of 
$$O(n^2\cdot \text{aggregation output-size complexity})+\text{total aggregation-runtime}$$ on the run of any MP-GNN should be relatively straightforward. However, a tighter bound, or perhaps one in terms other than runtime-complexity, for the expressivity of MP-GNNs with arbitrary computable aggregations, can be interesting.
\item [2.] We have not analyzed the output-size complexity of softmax aggregation. Potentially it is linear, rather than logarithmic, as it involves exponentiation by the input, however, a careful examination of the computation - taking into account the normalization and the actual algorithm for computing it - may prove otherwise.
\item [3.] We have not analyzed the output-size complexity of MLPs with non-ReLU activations such as \emph{sigmoid} or \emph{tanh}, which without considering computability have been shown to increase the distinguishing-power of MP-GNNs \citep{khalife2023power}. There again, an exponentiation by the input is involved, potentially leading to an exponential rather than linear output-size complexity of the MLP and higher distinguishing-power, yet further analysis is required for a clear characterization. 
\end{itemize}

\clearpage
\bibliographystyle{tmlr}
\bibliography{main.bib}
\clearpage

\newpage
\appendix
\section{Limited By Aggregation}
\begin{remark}\label{rem:meanagg}
Let $q^{(N)}_c$ be the common denominator of all the parameters of a mean-aggregation MP-GNN $N$, and for $n\in\Nat$ define $q^{(n)}_c\coloneqq \cd(\{\frac{1}{i+1}\}_{i\in[n-1]}\})$, then it is not difficult to show that $q^{(N)}_cq^{(n)}_c$ is a common denominator of all the computations of $N$ on graphs of size $n$. Noting that $q^{(n)}_c=O(3^n)$\citep{hanson1972product}, the line of proof of \Cref{lem:agg_info_sub_linear} would lead to $L_N(n)=O(n)$ and eventually to an exponential distinguishing-power upper bound. Such bound is meaningful as it is lower than the super-exponential number of non-isomorphic graphs.
\end{remark}

\lemexpobserve*
\begin{proof}
For $I_N(G,v)$, we show by induction on $m$ that
$$\forall G,G'\in\CG_\CB\ \forall v\in V(G)\ \forall v'\in V(G')\;\Big (Z(G)(v)=Z(G)(v')\wedge I_N(G,v)=I_N(G',v')\Big)\Rightarrow $$$$N(G,v)=N(G',v')$$
For $m=1$, $I_N(G,v)=\agg_1\lmulti \msg_1(v^{(0)},w^{(0)}) : w\in N_G(v)\rmulti$, hence
$Z(G)(v)=Z(G')(v')\wedge I_N(G,v)=I_N(G',v')\Rightarrow
\mlp_1(v^{(0)},\agg_1(\lmulti \msg_1(v^{(0)},w^{(0)}) : w\in N_G(v)\rmulti))=\mlp_1(v'^{(0)},\agg_1(\lmulti \msg_1(v'^{(0)},w^{(0)}) : w\in N_{G'}(v')\rmulti))\Rightarrow N(G,v)=N(G',v')$.
Assuming correctness for $m=k$, we prove for $m=k+1$. By the induction assumption, $Z(G)(v)=Z(G')(v')\wedge I_N(G,v)=I_N(G',v')\Rightarrow v^{(k)}=v'^{(k)}$. Also, $I_N(G,v)=I_N(G',v')\Rightarrow \agg_{k+1}\lmulti \msg_{k+1}(v^{(k)},w^{(k)}) : w\in N_G(v)\rmulti=\agg_{k+1}\lmulti \msg_{k+1}(v'^{(k)},w^{(k)}) : w\in N_{G'}(v')\rmulti$. Hence, $\mlp_{k+1}(v^{(k)},\agg_{k+1}(\lmulti \msg_{k+1}(v^{(k)},w^{(k)}) : w\in N_G(v)\rmulti))=\mlp_{k+1}(v'^{(k)},\agg_{k+1}(\lmulti \msg_{k+1}(v'^{(k)},w^{(k)}) : w\in N_{G'}(v)\rmulti))\Rightarrow N(G,v)=N(G',v')$.

For $I_N(G)$, $I_N(G)=I_N(G')\Rightarrow\agg_R\lmulti N(G,v) : v\in V(G)\rmulti=\agg_R\lmulti N(G',v') : v'\in V(G')\rmulti\Rightarrow
\mlp_R(\agg_R\lmulti N(G,v) : v\in V(G)\rmulti)=\mlp_R(\agg_R\lmulti N(G',v) : v\in V(G')\rmulti)\Rightarrow N(G)=N(G')$.

Hence, the distinguishing-power of $N$ is upper-bounded by the number of possible values of $I_N$ (and initial feature, in case of a feature transformation), which in turn is upper-bounded exponentially by the maximum bit-length of $I_N$ (plus the 1 bit of initial feature, in case of a feature transformation).
\end{proof}

To prove \Cref{lem:logarithmicagg}, we first prove the following two lemmas.
\begin{lemma}\label{lem:logpoly}
    The application of a polynomial does not affect the asymptotic bit-length complexity.
    Formally, let $p:\Rat\rightarrow\Rat$ be a rational polynomial and define $S:\Nat\rightarrow\Nat,\ S(n)\coloneqq\max(\bitlenfr(p(x)): \bitlenfr(x)\leq(n))$, then $S(n)=O(n)$. 
\end{lemma}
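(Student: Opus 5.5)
Write $p(x)=\sum_{j=0}^{D}c_jx^j$ with fixed degree $D$ and rational coefficients $c_j=\frac{a_j}{b_j}$ in reduced form. Let $B\in\Nat$ be the least common denominator of the coefficients and $A$ the largest $|a_j|\cdot B/b_j$. Then $p(x)=\frac{1}{B}\sum_{j}\alpha_jx^j$ with integers $\alpha_j$, $|\alpha_j|\le A$. All of $D,A,B$ depend only on $p$, so they are constants. Let $x=\frac{r}{s}$ be in reduced form with $\bitlenfr(x)\le n$. Since the integer encoding is assumed to use $O(\log)$ bits (equivalently, the encoding of an integer $z$ has length at least $\Omega(\log|z|)$), we get $|r|,s\le 2^{c\,n}$ for a constant $c$ depending only on the encoding.

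The plan is to bring $p(x)$ to the common denominator $Bs^D$ and bound numerator and denominator separately:
$$p(x)=\frac{\sum_{j=0}^{D}\alpha_j r^j s^{D-j}}{B s^D}.$$
The denominator satisfies $Bs^D\le B\,2^{cDn}$, which has bit-length $O(n)$. The numerator is a sum of $D+1$ terms, each at most $A\,2^{cDn}$ in absolute value, so the sum is at most $(D+1)A\,2^{cDn}$ and also has bit-length $O(n)$. Reducing the fraction divides numerator and denominator by their gcd, which can only shrink the absolute values, so the reduced form $\frac{p'}{q'}$ still has $|p'|,q'\le (D+1)AB\,2^{cDn}$.

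Finally, the encoding of an integer $z$ is $O(\log |z|)$ (plus a constant for sign and zero), so $\bitlenfr(p(x))\le O(\log((D+1)AB)+cDn)=O(n)$. Maximizing over all $x$ with $\bitlenfr(x)\le n$ gives $S(n)=O(n)$.

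The only step needing care is the two-way link between the encoding length and magnitude: a bit-length $\le n$ must force magnitude $\le 2^{O(n)}$, and a magnitude $\le 2^{O(n)}$ must give a bit-length $O(n)$. The ``$O(\log n)$ integer encoding'' assumption gives the upper direction directly. For the lower direction I would note that any encoding of integers needs at least roughly $\log_2|z|$ bits for some of the integers of magnitude $|z|$; to be safe I would fix a standard binary sign-magnitude encoding, for which both directions are immediate. Everything else is bookkeeping with constants depending on $p$.
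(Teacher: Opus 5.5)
Your proposal is correct and is essentially the paper's proof: both put $p(y/z)$ over a common denominator of the form $z^{k}\cdot(\text{constant})$ (the paper uses $\prod_j b_j$, you use the lcm $B$) and bound the logarithms of the numerator and the denominator separately by $O(n)$. Your version is somewhat more careful in three places. You note that reducing the fraction cannot increase its length. You also fix a standard binary encoding so that length and $\log$-magnitude agree up to constants; this, and not the ``$O(\log)$ encoding'' assumption alone (your ``equivalently'' is a slip), is what gives $|r|,s\le 2^{O(n)}$. Finally, you correctly conclude $O(n)$, whereas the paper's last line writes $O(\log n)$.
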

\begin{proof}
Let $p(x)=\Sigma^{k}_{i=0}\frac{a_i}{b_i}x^i, a_i\in\Int,b_i\in\Nat$, and let $y\in\Int, z\in\Nat, n\in\Nat$ such that $\bitlenfr(\frac{y}{z})=n$. Then, 
$$p(\frac{y}{z})=\Sigma^{k}_{i=0}\frac{a_iy^iz^{k}\prod^{k}_{j=0}b_j}{z^ib_iz^{k}\prod^{k}_{j=0}b_j}=\frac{1}{z^{k}\prod^{k}_{j=0}b_j}\Sigma^{k}_{i=0}a_iy^iz^{k-i}\prod_{j\neq i}b_j$$
Hence, 
$\bitlenfr(p(\frac{y}{z}))\leq\log(z^{k}\prod^{k}_{j=0}b_j)+\log(\Sigma^{k}_{i=0}|a_i||y|^iz^{k-i}\prod_{j\neq i}b_j)\leq\log(z^{k}\prod^{k}_{j=0}b_j)+\Sigma^{k}_{i=0}\log(|a_i||y|^iz^{k-i}\prod_{j\neq i}b_j)=O(\log(z^{k}))+\Sigma^{k}_{i=0}O(\log(|y|^iz^{k-i}))\leq O(\log n)+k^2O(\log n)=O(\log n)$ 
\end{proof}
\begin{lemma}\label{lem:sumlog}    
    The bit-length of the sum of $n$ elements of length $O(\log n)$ and common-denominator-length $O(\log n)$ is $O(\log n)$.
\end{lemma}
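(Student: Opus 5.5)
Write the $n$ summands as $x_i = \frac{p_i}{q_i}$ in reduced form, with $\bitlenfr(x_i)\leq c_1\log n$. Let $D=\cd(x_1,\ldots,x_n)$ with $\bitlen(D)\leq c_2\log n$; vectors are handled coordinatewise and summed over a fixed number of coordinates. The plan is to rewrite every summand over the common denominator $D$, add the numerators as integers, and then bound the numerator and the denominator of the (not yet reduced) result separately. Reduction only shrinks both $|p|$ and $q$, so these bounds also bound $\bitlenfr$ of the reduced sum.

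First I bound the magnitude of the numerator. Each $x_i = \frac{a_i}{D}$ with $a_i = p_i\cdot\frac{D}{q_i}\in\Int$. Since $|p_i|\leq 2^{c_1\log n}=n^{c_1}$ and $\frac{D}{q_i}\leq D\leq 2^{c_2\log n}=n^{c_2}$, we get $|a_i|\leq n^{c_1+c_2}$. Hence
$$\Big|\sum_{i\in[n]} a_i\Big|\leq n\cdot n^{c_1+c_2}=n^{c_1+c_2+1}.$$
Its bit-length is therefore at most $(c_1+c_2+1)\log n+O(1)=O(\log n)$.

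Next I bound the denominator. The sum equals $\frac{\sum_i a_i}{D}$, and after reduction its denominator divides $D$. So the denominator has bit-length at most $\bitlen(D)=O(\log n)$, and the reduced numerator is bounded by the estimate above. Adding the two parts gives $\bitlenfr\big(\sum_i x_i\big)=O(\log n)$. If the summands are vectors of fixed dimension $d$, the same bound holds for each coordinate, and summing over the $d$ coordinates keeps it $O(\log n)$.

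The main point to be careful about, rather than a real obstacle, is that the common-denominator hypothesis is what makes this work. Without it, the lcm of $n$ denominators, each of $O(\log n)$ bits, can have $\Theta(n)$ bits; the $\mean$ example in the paper shows exactly this. So the argument must never multiply the individual denominators together; it only uses $D$ and the crude estimate $D/q_i\leq D$. The constants $c_1,c_2$ come from the $O(\log n)$ hypotheses and hold for all sufficiently large $n$, and the small cases are absorbed into the $O$-constant.
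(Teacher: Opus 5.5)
Your proposal is correct and follows essentially the same approach as the paper: rewrite every summand over the common denominator $D$, bound each integer numerator $p_i\cdot D/q_i$ by $2^{O(\log n)}\cdot 2^{O(\log n)}$ so that the summed numerator has magnitude at most $n\cdot 2^{O(\log n)}$, and note that the denominator is bounded by $D$. Your added remarks — that reduction only shrinks the numerator and denominator, and that the reduced denominator divides $D$ — spell out steps the paper's one-line computation leaves implicit.
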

\begin{proof}
        Let $n\in\Nat,\ M\in\multiset{\Rat}{n},\ M=\lmulti\frac{p_i}{q_i}, p_i\in\Int, q_i\in\Nat\rmulti_{i\in[n]}, \bitlenfr(\cd(M)=O(\log n)$, and define $q\coloneqq \cd(M)$, then
        \begin{multline*}
        \bitlen(\Sigma_{i\in[n]}\frac{p_i}{q_i}) = \bitlenfr(\frac{1}{q}\Sigma_{i\in[n]}\frac{p_iq}{q_i})\leq O(\log n) + O(\log (n2^{O(\log n)}2^{O(\log n)})) = O(\log n)
        \end{multline*}
\end{proof}
\lemlogarithmicagg*
\begin{proof}  
 As these aggregations operate on vectors per-dimension, it is enough to show that they are  logarithmic when operating on multisets of scalars.
    \begin{itemize}
        \item [1.]  Let $p_1,p_2:\Rat\rightarrow\Rat$ be rational polynomials and let $f,g:\Nat\rightarrow\Nat$ such that $f(n),g(n)=O(\log n)$.
        Let $n\in\Nat, M\in\multiset{\Rat}{n},\ M=\lmulti x_1,\ldots,x_n\rmulti,\ \bitlenfr(x_i)\leq f(n), \bitlenfr(\cd(M))\leq g(n)$ be a multiset of elements of bit-length at most $f(n)$ per-element, and a common denominator of bit-length at most $g(n)$, and let $T\subseteq M, T=\{x'_1,\ldots,x'_m\}$. By \Cref{lem:logpoly} we have that $\forall i\ \bitlenfr(p_1(x'_i))=O(\log n)$. Hence, by \Cref{lem:sumlog} we have that $\bitlenfr(\Sigma_{i\in[m]}p_1(x'_i))=O(\log n)$. Hence, by \Cref{lem:logpoly} we have that $\bitlenfr(p_2(\Sigma_{i\in[m]}p_1(x'_i)))=O(\log n)$. Hence, $S_{\agg}=O(\log n)$.

        For $S^{\cd}_{\agg}$, define $q_{\cd}\coloneqq\cd(M)$, assume $p_1(x)=\Sigma^{k_1}_{i=0}\frac{a_i}{b_i}x^i, p_2(x)=\Sigma^{k_2}_{i=0}\frac{r_i}{s_i}x^i,\ b_i\in\Nat,s_i\in\Nat,a_i\in\Int,r_i\in\Int$, define $q_{p_1}\coloneqq q_{\cd}^{k_1}\prod^{k_1}_{i=0}b_i$, and define $q\coloneqq q_{p_1}^{k_2}\prod^{k_2}_{i=0}s_i$. Observe that:
        \begin{itemize}
            \item [1.] $q_{p_1}$ is a common denominator for all applications of $p_1$ on values that are commonly denominated by $q_{\cd}$. That is, let $\frac{y}{z}\in \Rat$ such that $\frac{q_{\cd}}{z}\in\Nat$, then there exists $y'\in\Int$ such that $p_1(\frac{y}{z})=\frac{y'}{q_{p_1}}$. This is because 
            $\forall i\ \frac{q_{p_1}}{z^ib_i}=(\frac{q_{\cd}}{z})^iq_{\cd}^{k_1-i}\prod_{j\neq i}b_j\in\Nat$.
            \item [2.] Similarly, $q$ is a common denominator for all applications of $p_2$ on values that are commonly denominated by $q_{p_1}$. That is, let $\frac{y}{z}\in \Rat$ such that $\frac{q_{p_1}}{z}\in\Nat$, then there exists $y'\in\Int$ such that $p_2(\frac{y}{z})=\frac{y'}{q}$. This is because 
            $\forall i\ \frac{q}{z^is_i}=(\frac{q_{p_1}}{z})^iq_{p_1}^{k_2-i}\prod_{j\neq i}s_j\in\Nat$.            
        \end{itemize}
         By (1) we have that $\forall W\subseteq M\;\ \frac{\Sigma_{x\in W}p_1(x)}{q_{p_1}}\in\Int$. Then, by (2) we have that $\forall W\subseteq M\;\ \frac{p_2(\Sigma_{x\in W}p_1(x))}{q}\in\Int$. Hence, we have that $\forall U\subseteq\multiset{M}{*}\ \cd(\lmulti p_2(\Sigma_{x\in W}p_1(x))\rmulti_{W\in U})\leq q$.
         Hence, $\bitlenfr(\cd(\lmulti p_2(\Sigma_{x\in W}p_1(x))\rmulti_{W\in U}))\leq\bitlenfr(q)=O(\log(q_{p_1}^{k_2}\prod^{k_2}_{i=0}s_i)=O(\log(q_{p_1}^{k_2}))=O(\log(q_{\cd}^{k_1}\prod^{k_1}_{i=0}b_i))=O(\log(q_{\cd}^{k_1}))=O(k_1\log(q_{\cd}))=O(\log n)$. Hence, $S^{\cd}_{\agg}=O(\log n)$.
         [2.] $\bitlenfr(\agg)=\Sigma_{i\in[a]}\bitlenfr(\agg_i)=aO(\log n)=O(\log n)$. As for $S^{\cd}_{\agg}$, let $U\subseteq\multiset{M}{*}$, define $\cd_i\coloneqq\cd(\lmulti \agg_i(T)\rmulti_{T\in U})$, and define $\cd_{\agg}\coloneqq\cd(\lmulti \agg(T)\rmulti_{T\in U}))$. Then $\cd_{\agg}\leq \prod_{i\in[a]}\cd_i$, hence $\bitlenfr(\cd_{\agg})\leq\bitlenfr(\prod_{i\in[a]}\cd_i)=O(\log(\prod_{i\in[a]}\cd_i))=O(aO(\log n))=O(\log n)$.  
    \end{itemize}
\end{proof}
\examlogarithmic*
\begin{proof}
    \begin{itemize}
        \item [1.] By \Cref{lem:logarithmicagg}(1), setting $T=M,\ p_1(x)=x,\ p_2(x)=x$, we have that $\sum$ is logarithmic.
        \item [2.] Defining $\agg_i$ to be the selection of the $i^{\text{th}}, i\in[k]$ element (by whichever criteria) we have that $\agg_i$ is logarithmic. Then, by \Cref{lem:logarithmicagg}(2) we have that 
        $(\agg_1,\ldots,\agg_k)$ is logarithmic i.e. the selection of the $k$ elements is logarithmic.
        \item [3.] Note that by $\agg$ being logarithmic when applied to $M$, it is logarithmic when applied to any $T\subseteq M$. Defining $T_i, i\in[k]$ to be the elements in bin $k$, and defining $\agg_i\coloneqq \agg(T_i)$ we have that $\agg_i$ is logarithmic and by \Cref{lem:logarithmicagg}(2) $(\agg_1,\ldots,\agg_k)$ is logarithmic i.e. the sequence of aggregated bins is logarithmic.
    \end{itemize}
\end{proof}
\lemmlplinear*
\begin{proof}
    1. Assume $F=(l_1,\ldots,l_m), l_i=(w_i,b_i)$, dim$(w_i)=(d_{i+1},d_i)$.    
     Define $S_F(k)$ to be the output-size complexity of a single layer $l$, i.e. $S_F(k) \coloneqq \max(\bitlenfr(\relu(w_lx+b_l)) : x\in\Rat^{d_l}, \bitlenfr(x)= k)$, we show that $S_{F_l}(n)=O(n)$. As it is straightforward that the output-size complexity of a composition of a fixed number of functions of linear output-size complexity is linear, proving $\forall\ l\; S_{F_l}(n)=O(n)$ will prove $S_F(n)=O(n)$.
    
    Assume w.l.o.g that $l=l_1$. For $i\in[d_2],j\in[d_1]$ let $\frac{p^{(w)}_{i,j}}{q^{(w)}_{i,j}}=w_1(i,j), p^{(w)}_{i,j}\in\Int, q^{(w)}_{i,j}\in\Nat$, and for $i\in[d_2]$ let $\frac{p^{(b)}_i}{q^{(b)}_i}=b_1(i), p^{(v)}_{i}\in\Int, q^{(b)}_{i}\in\Nat$. Define $q^{(w)}_c\coloneqq\prod_{i\in[d_2],j\in[d_1]}q^{(w)}_{i,j}\prod_{i\in[d_2]}q^{(b)}_i$, 
    a common denominator of all the parameters. Let $n\in\Nat, x\in\Rat^{d_1}, \bitlenfr(x)=n$. For $i\in[d_1]$ let $\frac{p^{(x)}_i}{q^{(x)}_i}=x(i), p^{(x)}_{i}\in\Int, q^{(x)}_{i}\in\Nat$, and define $q^{(x)}_c\coloneqq\prod_{i\in[d_1]}q^{(x)}_i$ a common denominator of the input values.  
    For $i\in[d_2]$ let $\frac{p^{(r)}_i}{q^{(r)}_i}=\relu{\Big((w_1x)(i) + b_1(i)\Big)}, p^{(r)}_{i}\in\Int, q^{(r)}_{i}\in\Nat$, then
    \begin{multline}\label{eq:eq1}
    \bitlen(q^{(r)}_i)\leq\bitlen(q^{(w)}_c\cdot q^{(x)}_c)=
    O(\log(q^{(w)}_c)+\log(q^{(x)}_c))=O(\log(q^{(x)}_c))=O(\Sigma_{i\in[d_1]}\log(q^{(x)}_i))\\
    =O(\bitlen(x))
    \end{multline}
    Define 
    $p^{(w)}_{\max}\coloneqq \max(\abs{p^{(w)}_{i,j}} : i\in[d_2],j\in[d_1])$, then
    \begin{multline}\label{eq:eq2}
    \bitlen(p^{(r)}_i)\leq\bitlenfr(q^{(r)}_i(p^{(b)}_i+\Sigma_{j\in[d_1]}p^{(w)}_{i,j}p^{(x)}_j))=
    O\Big(\log(q^{(r)}_i)+\log(\abs{p^{(b)}_i}+p^{(w)}_{\max}\Sigma_{j\in[d_1]}\abs{p^{(x)}_j})\Big)=\\
    O(\log(p^{(w)}_{\max}\Sigma_{j\in[d_1]}\abs{p^{(x)}_j}))=O(\log(p^{(w)}_{\max})+\log(\Sigma_{j\in[d_1]}\abs{p^{(x)}_j}))=O(\Sigma_{j\in[d_1]}\log(\abs{p^{(x)}_j}))=O(\bitlen(x))     
    \end{multline}
    As $\bitlenfr(\relu(w_1x+b_1))=O\Big(\Sigma_{i\in[d_2]}(\bitlen(p^{(r)}_i)+\bitlen(q^{(r)}_i))\Big)=
    O\Big(d_2\big(\max_{i\in[d_2]}(\bitlen(p^{(r)}_i))+\max_{i\in[d_2]}(\bitlen(q^{(r)}_i))\big)\Big)$
    , by \Cref{eq:eq1,eq:eq2} we have $\bitlenfr\big(\relu(w_1x+b_1)\big)=O(\bitlen(x))$.
    
    2. Define $q^{F}_c\coloneqq \cd(\lmulti w_i, b_i : i\in[m]\rmulti)$ the common denominator of all the parameters in $F$. Let $f:\Nat\rightarrow\Nat, f(x)=O(\log x)$, let $n\in\Nat$ and let $M\in\multiset{\Rat^d_k}{n}$ such that $\bitlenfr(\cd(M))=f(n)$. Define $q^{F,M}_c\coloneqq \cd(\lmulti F(x)\rmulti_{x\in M})$ then clearly $q^{F,M}_c\leq q^{F}_c\cdot\cd(M)$. Hence, $\bitlenfr(q^{F,M}_c)\leq\bitlenfr(q^{F}_c\cdot\cd(M))=O(\log n)$, hence $S^{\cd}_F(n,k,f(n))=O(\log n)$.
\end{proof}

\end{document}